\newcommand*\circled[1]{(#1)}
\newcommand{\parheader}[1]{{\bf \smallskip \noindent #1.}}
\newcommand{\DONE}[1]{\noindent \textcolor{green}{\textbf{DONE}}\\ }
\newcommand{\bs}{\boldsymbol}
\newcommand{\pee}{\mathbb{P}}
\DeclareMathOperator*{\argmin}{argmin}
\DeclareMathOperator*{\inv}{inv}
\newtheorem{definition}{Definition}[section]
\newtheorem{proposition}{Proposition}[section]
\newenvironment{proofsketch}{%
  \proof}{\endproof}
\newenvironment{llmprompt}{%
\vspace{0.5mm}\it%
\begin{adjustwidth}{3mm}{0mm}
}{\vspace{0.5mm}
\end{adjustwidth}
}
\title{Found in the Middle:\ Permutation Self-Consistency Improves\\ Listwise Ranking in Large Language Models}
\author{Raphael Tang,$^{*1}$ Xinyu Zhang,\thanks{~~Equal contribution.}$^2$ Xueguang Ma,$^2$ Jimmy Lin,$^2$ Ferhan Ture$^1$ \vspace{1mm}\\
$^1$Comcast AI Technologies~~~$^2$University of Waterloo\\
{\small $^1$\texttt{{\{raphael\_tang, ferhan\_ture\}}@comcast.com}~~~$^2$\texttt{\{x978zhan, x93ma, jimmylin\}@uwaterloo.ca}}}
\begin{document}
\maketitle
\begin{abstract}
Large language models (LLMs) exhibit positional bias in how they use context, which especially affects listwise ranking.
To address this, we propose \textit{permutation self-consistency}, a form of self-consistency over the ranking list outputs of black-box LLMs.
Our key idea is to marginalize out different list orders in the prompt to produce an order-independent ranking with less positional bias.
First, given some input prompt, we repeatedly shuffle the list in the prompt and pass it through the LLM while holding the instructions the same.
Next, we aggregate the resulting sample of rankings by computing the central ranking closest in distance to all of them, marginalizing out prompt order biases in the process.
Theoretically, we prove the robustness of our method, showing convergence to the true ranking under random perturbations.
Empirically, on five datasets in sorting and passage reranking, our approach improves scores from conventional inference by up to 34--52\% for Mistral, 7--18\% for GPT-3.5, 8--16\% for LLaMA v2 (70B).
Our code is at \url{https://github.com/castorini/perm-sc}.
\end{abstract}

\section{Introduction}
Large language models (LLMs) respond cogently to free-form textual prompts and represent the state of the art across many tasks~\cite{zhao2023survey}.
Their quality, however, varies with nuisance positional factors such as prompt order and input length.
As a descriptive example, consider this prompt:
\begin{llmprompt}
Arrange the following passages in decreasing relevance to the query, ``what are shrews?''\\[0.2ex]
\noindent (1) Cats hunt small mammals, such as shrews ...\\
\noindent (2) Shrews are mole-like mammals, widely ...\\
\noindent (3) Shrews use their noses to find prey and ...
\end{llmprompt}
The correct output order is $(2, 3, 1)$, from most to least relevant, but several positional biases may interfere with the model.
\citet{liu2023lost} demonstrate that LLMs tend to get ``lost in the middle'' of a long context and use the middle portion poorly, which suggests that the middle passage ($2$) in the example may get misranked (e.g., $3, 1, \underline{2}$).
\citet{wang2023large} find prompt order to affect quality, with some orders outperforming others;\ if items 1 and 3 were swapped in the prompt, the LLM would perhaps generate the mistaken ranking $(2, \underline{1}, \underline{3})$.

\begin{figure}
    \centering
    \includegraphics[scale=0.8,trim={0mm 1mm 0mm 1mm},clip]{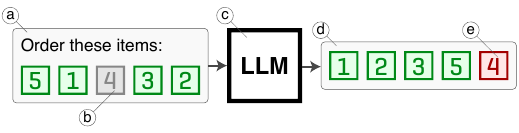}
    \caption{The conventional decoding process for listwise ranking with input prompt \circled{a}, language model \circled{c}, and output ranking \circled{d}. The grey item \circled{b} is ``lost in the middle'' by the LLM, resulting in its misranking \circled{e}.}\label{fig:ex1}
\end{figure}
\begin{figure}
    \centering
    \includegraphics[scale=0.56,trim={0mm 1mm 0mm 1mm},clip]{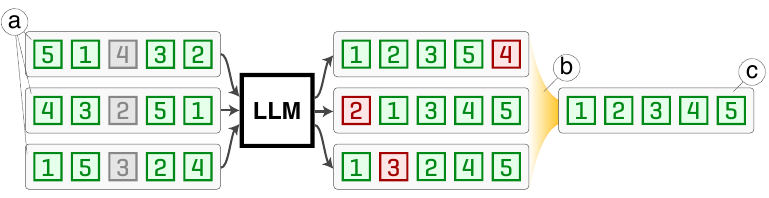}
    \caption{Our permutation self-consistency process. With the instruction fixed, we shuffle the input list for prompts \circled{a}, producing outputs with different mistakes. We aggregate \circled{b} these output rankings into one \circled{c}.}\label{fig:ex2}
\end{figure}

In this paper, we mitigate positional biases for listwise-ranking LLMs.
We propose \textit{permutation self-consistency}, a novel decoding strategy for improving the quality, consistency, and prompt-order invariance of black-box LLMs.
First, we construct prompts with randomly permuted input lists, then feed them into an LLM to generate a set of output rankings.
Then, we aggregate these outputs into the central ranking that minimizes the Kendall tau distance to all of them, marginalizing out prompt order as a factor;\ see Figures~\ref{fig:ex1} and \ref{fig:ex2}.
As related work, \citet{stoehr2023unsupervised} train direction-unaware probes on the representations of language models to detect order consistency, but their evaluation reveals the ranking direction of test examples to the model, deviating from standard practices.

Next, we assess the effectiveness of \mbox{permutation} self-consistency, both theoretically and empirically.
Theoretically, we prove in Section~\ref{sec:our-approach:theoretical-guarantees} that it recovers the true ranking under arbitrary noise distributions with enough observations and at least one correctly ordered pair in each observation.
Experimentally, we apply our method to tasks in math and word sorting, sentence ordering, and passage reranking~(\citealp{craswell2020overview}, \citeyear{craswell2021overview}), consistently increasing the scores of GPT-3.5, GPT-4, and LLaMA v2 (70B;\ \citealp{touvron2023llama}) by up to 4--17\%, 9--24\%, and 8--16\%, respectively.
We achieve similar gains for Mistral~\cite{jiang2023mistral} and Zephyr~\cite{tunstall2023zephyr}.
We conclude that permutation self-consistency improves listwise ranking in LLMs.
In line with our premises, we observe positional bias, as shown in Section~\ref{sec:position-bias}.

Finally, we conduct auxiliary analyses to justify our design choices.
In Section~\ref{sec:hyperparameter}, our hyperparameter study finds that quality quickly rises with the number of aggregated output rankings:\ the score improvement from using five aggregated rankings reaches 67\% of twenty, on average, suggesting that a few suffice for quality gain.
We further demonstrate that sampling temperature is ineffective for us, unlike the original self-consistency work~\cite{wang2023selfconsistency} in chain-of-thought reasoning, likely because listwise ranking does not require exploration of various reasoning paths.

Our contributions are as follows:\ \textbf{(1)} we propose a novel decoding technique for improving the quality, consistency, and position invariance of black-box, listwise-ranking LLMs;\ \textbf{(2)} we empirically establish the validity of our method in sorting and passage reranking on seven models and five datasets, and we theoretically prove the robustness of our method to certain classes of ranking noise, including ``lost-in-the-middle'' type ones;\ and \textbf{(3)}~we provide new analyses on positional biases in listwise-ranking LLMs, finding that biases depend on pairwise positions of items in the list.

\section{Our Approach}

\subsection{Preliminaries}
\parheader{Notation}
We define an $n$-\textit{ranking} as a permutation $\sigma: \{1, \dots, n\} \mapsto \{1, \dots, n\}$.
For some sequence $\bs X := \{X_i\}_{i=1}^n$, define $\bs X[\sigma]$ as the permuted sequence of $\bs X$ transformed by $\sigma$, where $\bs X[\sigma]_i := X_{\sigma(i)}$.
Let the inversion vector of $\sigma$ be
\begin{equation}
    \inv(\sigma)_i := \#\{j : \sigma(j) > \sigma(i), j < i \}.
\end{equation}

To quantify dissimilarity, the Kendall tau distance between two rankings $\sigma_1$ and $\sigma_2$ is the number of inversions in $\sigma_1^{-1} \circ \sigma_2$:\vspace{-2mm}
\begin{equation}
    d_{\kappa}\left(\sigma_1, \sigma_2\right) := \sum_{i=1}^n \inv(\sigma_1^{-1} \circ \sigma_2)_i.
\end{equation}
In other words, it is the number of pairwise disagreements, or \textit{discordant} pairs, in the permutation ordering.
The distance is one affine transform away from the Kendall tau correlation, used to measure list order similarity~\cite{kendall1948rank}:\vspace{-1mm}
\begin{equation}
\tau(\sigma_1, \sigma_2) := 1 - \frac{2d_{\kappa}(\sigma_1, \sigma_2)}{\binom{n}{2}}.
\end{equation}
In the extreme, $\tau=1 \iff \sigma_1 = \sigma_2$, and $\tau=-1$ implies that one is the other's reverse.

\subsection{Permutation Self-Consistency}\label{sec:psc}
How do we mitigate positional biases in listwise-ranking LLMs?
We find inspiration in the self-consistency framework~\cite{wang2023selfconsistency}, which improves quality and consistency in chain-of-thought prompting~\cite{wei2022chain}.
The approach has two main stages:\ first, it \textit{samples} multiple answers for an input prompt;\ then, it \textit{aggregates} the sampled answers into a single, high-quality one, hence ``marginalizing out'' separate reasoning paths from the language model.

Unfortunately, self-consistency does not readily generalize to listwise ranking for a few reasons.
For one, it is limited to point predictions, greatly simplifying the aggregation procedure to taking the majority vote.
For another, sampling temperature, the method's mainstay of generating diverse samples for aggregation, has little effect on (and at times harming) the quality of aggregated predictions in listwise ranking, as shown in Section~\ref{sec:hyperparameter}.
Lastly, self-consistency does not explicitly address positional bias, the central issue of our paper.

Nevertheless, its shuffle--aggregate paradigm is still a useful template.
With it, we propose \textit{permutation self-consistency}:\ for the first sample step, we randomly shuffle the list in the prompt to curate a diverse set of rankings, each with different position biases.
For the next aggregate step, we compute the central ranking closest in Kendall tau distance to all the sampled rankings, which, like self-consistency, marginalizes out the independent variable (in the original, reasoning paths;\ in ours, prompt order).
Intuitively, we intervene on list order, collect output rankings, then aggregate, breaking the association between individual list order and output rankings.

\begin{table}[t]\small
    \setlength{\tabcolsep}{1.75pt}
    \centering
    \begin{tabular}{ll}
    \toprule[1pt]
    \textbf{Task} & \textbf{Example Input Prompt} \\
    \midrule[1pt]
    Math Sorting & Sort these expressions: 3 / 2, 1 - 5, ... \\
    \midrule
    Sentence Ordering & Order the shuffled sentences: [1] The...\\
    \midrule
    Passage Ranking & Order these by relevance to the query, \\
    & ``what are shrews?'': [1] Cats hunt...\\
    \bottomrule[1pt]
    \end{tabular}
    \caption{Listwise-ranking input prompt examples.}
    \label{tab:ex3}
\end{table}

Formally, we are given an input sequence of items $\bs X := \{X_i\}_{i=1}^n$, such as a list of passages, along with a listwise-ranking LLM $h(\bs X;\ s)$ that returns an $n$-ranking on some string prompt $s$;\ see Table~\ref{tab:ex3} for an example.
First, we construct a diverse set of output rankings by randomly permuting $\bs X$ and passing it through the LLM, like how self-consistency uses temperature to vary their output.
Specifically, we sample a sequence
\begin{equation}\label{eqn:shuffle}
    \hat{\sigma}_i := h(\bs X[\pi_i];\ s)\text{ for }1 \leq i \leq m,
\end{equation}
where $\pi_i$ is drawn uniformly at random from the set of all possible $n$-rankings.
As noted previously, each output ranking has positional bias, but mistakes are expected to differ among the outputs because of our input order randomization.
We then ``marginalize out'' these individual biases by aggregating the output rankings into a single central ranking.
One method with attractive theoretical properties is the Kemeny--Young~\cite{kemeny1959mathematics} optimal ranking of the outputs---that is, the central ranking that minimizes the sum of its Kendall tau distances to every output ranking:
\begin{equation}
    \bar{\sigma} := \argmin_{\sigma}\sum_{1 \leq i \leq m} d_\kappa(\hat{\sigma}_i, \sigma).
\end{equation}
Our approach returns $\bar{\sigma}$ as the prediction for $\bs X$ and terminates.
Although this calculation is NP-hard, fast exact and approximate algorithms exist~\cite{conitzer2006improved, ali2012experiments}, many implemented in our codebase.

\parheader{Passage reranking}
The task of passage ranking is to rank a set of provided passages in order of relevance to a given query.
The use of permutation self-consistency for this case deserves special attention.
Due to the LLM input length constraint, predominant LLM-based approaches such as RankGPT~\cite{sun2023chatgpt}, LRL~\cite{ma2023zero}, and RankVicuna~\cite{pradeep2023rankvicuna} stride the LLM across fixed windows of items from the back of the list to the front, rather than output a ranking in a single pass.
In this case, we apply permutation self-consistency to each window.

\subsection{Theoretical Guarantees}\label{sec:our-approach:theoretical-guarantees}
We now show that for certain kinds of noisy rankings, the Kemeny ranking can recover the true ranking given enough observations.
For example, if there always exists some random pair of items that is correctly ranked among randomly ordered observations, we will converge to the true ranking.

\begin{definition}
    For two rankings $\sigma_1$ and $\sigma_2$, the concordant subset is a set $S'$ where $\forall i$ and $ j \in S', \sigma_1(i) < \sigma_1(j) \wedge \sigma_2(i) < \sigma_2(j)$ or $\sigma_1(i) > \sigma_1(j) \wedge \sigma_2(i) > \sigma_2(j)$. 
\end{definition}

\begin{proposition}\label{prop:denoise1}
Let there be a true ranking $\sigma$ and a sequence of i.i.d.\ uniformly noisy rankings $\hat{\bs\sigma} := \{\hat{\sigma}_i\}_{i=1}^m$.
Suppose each noisy ranking $\hat{\sigma}_k$ has a uniformly random, nonempty concordant subset $S'_k$ with $\sigma$, and the remaining rank elements not in $S'_k$ represent a random permutation.
Then the Kemeny--Young ranking $\bar{\sigma}$ of $\hat{\bs\sigma}$ converges in probability to $\sigma$, i.e., it is a consistent estimator.
\end{proposition}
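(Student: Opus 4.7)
The plan is to reduce the problem to pairwise comparisons and then combine the strong law of large numbers with the pairwise decomposition of the Kendall tau distance.

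First, fix any pair of items $(i,j)$ with $\sigma(i) < \sigma(j)$ and set $p_{ij} := \Pr[\hat{\sigma}_k(i) < \hat{\sigma}_k(j)]$. I would show $p_{ij} > \tfrac{1}{2}$ strictly by conditioning on whether both indices fall in the concordant subset $S'_k$: when both $i, j \in S'_k$, the pair is correctly ordered with probability one by definition of concordance; when at least one lies outside, the uniformly random residual permutation orders the pair correctly with probability exactly $\tfrac{1}{2}$ by a symmetry argument (swapping the roles of $i$ and $j$ in the residual sample leaves its distribution invariant). Since $S'_k$ is a nonempty, uniformly random subset of $\{1, \dots, n\}$, we have $\Pr[i, j \in S'_k] > 0$, so the two cases combine to give $p_{ij} > \tfrac{1}{2}$.

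Next, because the $\hat{\sigma}_k$ are i.i.d., the strong law of large numbers applied to the indicators $\mathbf{1}[\hat{\sigma}_k(i) < \hat{\sigma}_k(j)]$ shows that the empirical fraction of rankings agreeing with $\sigma$ on the pair $(i,j)$ converges almost surely to $p_{ij} > \tfrac{1}{2}$. A union bound over the $\binom{n}{2}$ pairs yields that, with probability tending to one as $m \to \infty$, every pair is ordered consistently with $\sigma$ by a strict majority of the sample.

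Finally, I would use the pairwise decomposition $\sum_k d_\kappa(\hat{\sigma}_k, \sigma') = \sum_{i<j} c_{ij}(\sigma')$, where $c_{ij}(\sigma') \in \{a_{ij},\, m - a_{ij}\}$ counts disagreements of $\sigma'$ with the sample on pair $(i,j)$ and $a_{ij}$ is the number of $\hat{\sigma}_k$ agreeing with $\sigma$. On the high-probability event from step two, $a_{ij} > m - a_{ij}$ for every pair, so $\sigma$ attains the per-pair minimum $m - a_{ij}$ simultaneously and therefore uniquely minimizes the total sum. Hence $\bar\sigma = \sigma$ on this event, giving $\Pr[\bar\sigma = \sigma] \to 1$, which is convergence in probability.

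The main obstacle is the residual case in step one: one must carefully interpret what "random permutation" of the non-$S'_k$ elements means (most naturally, their positions within the slots not occupied by $S'_k$ are uniformly random, with the $S'_k$-slot assignment itself preserving the order induced by $\sigma$) and then argue by symmetry that mixed or outside pairs contribute an exact $\tfrac{1}{2}$. Once this bias is pinned down, the rest of the argument is essentially bookkeeping.
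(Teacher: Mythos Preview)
Your argument is essentially the paper's: reduce to pairwise comparisons, show each pair is concordant with probability strictly above $\tfrac{1}{2}$, concentrate over the $m$ samples, union-bound over the $\binom{n}{2}$ pairs, and then use the pairwise decomposition of the Kendall sum to conclude that $\sigma$ is the unique Kemeny minimizer on the high-probability event. The only differences are cosmetic: the paper applies Hoeffding's inequality (obtaining an explicit $\exp(-2m\delta^{2})$ rate) where you appeal to the strong law of large numbers, and the paper simply asserts the strict bias $p_k \le \tfrac{1}{2}-\delta$ that you justify more carefully by conditioning on whether $\{i,j\}\subseteq S'_k$---the interpretational caveat you flag about the residual model is present in both treatments.
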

\begin{proofsketch}
Let $A_{ij}$ be the event that the sum of discordant pairs indexed by $i$ and $j$ between $\hat{\bs\sigma}$ and $\sigma$ is greater than the number of concordant ones.
$\pee(A_{ij})$ is upper-bounded by $\exp({-O(m)})$.
The union bound of $\pee(\bigcap_{i,j} A_{ij})$ shows that the probability of the sum of discordant pairs being greater than that of the concordant pairs vanishes for any pair as $m$ approaches infinity.
Thus, the Kemeny-optimal ranking will always approach $\sigma$ for $m \to \infty$, concluding our proof.
\end{proofsketch}

\noindent To extend this, we prove that, in the presence of ranking noise, characterized empirically in Section~\ref{sec:position-bias}, our approach yields a consistent estimator for the true ranking, given that at least one possibly nonrandom pair of items is always concordant:
\begin{proposition}
Let there be a true ranking $\sigma$ and a distribution of noisy rankings $\pee(\sigma_\text{noise})$, where $\sigma_\text{noise}\circ\pi$ always has a uniform, non-empty concordant subset $S$ with $\sigma$ for any input ranking $\pi$, and the elements not in $S$ are uniformly random.
Then the permutation self-consistency procedure is a consistent estimator of $\sigma$ when applied to the input $\pi$ and the ``LLM'' characterized by $\pee(\sigma_\text{noise})$.
\end{proposition}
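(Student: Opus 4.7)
The plan is to reduce Proposition~2 directly to Proposition~\ref{prop:denoise1} by checking that every output sampled by the permutation self-consistency procedure already satisfies the hypothesis of the earlier proposition, so that no fresh concentration argument is required.

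First, I would unpack the procedure in terms of the noise model. For $i=1,\dots,m$ the procedure draws $\pi_i$ i.i.d.\ uniformly over the symmetric group, feeds $\bs X[\pi_i]$ to the LLM, and obtains $\hat{\sigma}_i := h(\bs X[\pi_i];\ s)$. Modeling the LLM as $h(\bs X[\pi_i];\ s) = \sigma_\text{noise}^{(i)} \circ \pi_i$ with $\sigma_\text{noise}^{(i)} \sim \pee(\sigma_\text{noise})$ (possibly conditional on $\pi_i$), the hypothesis of Proposition~2 applied to each realisation of $\pi_i$ guarantees that $\hat{\sigma}_i$ has a uniformly distributed, non-empty concordant subset $S_i$ with the true ranking $\sigma$, and that the entries outside $S_i$ are uniformly random.

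Second, I would observe that independence of the input shuffles $\pi_i$ together with independence of the noise draws implies that the $\hat{\sigma}_i$ form an i.i.d.\ sequence whose common distribution matches the one assumed in Proposition~\ref{prop:denoise1}. Applying that proposition to $\{\hat{\sigma}_i\}_{i=1}^m$ then immediately yields that the Kemeny--Young aggregate $\bar{\sigma}$ converges in probability to $\sigma$, which is the consistency claim.

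The main obstacle, and really the only substantive point, is the passage from position-indexed noise (what the LLM actually produces) to item-indexed uniformity (what Proposition~\ref{prop:denoise1} consumes). This is exactly what the uniform shuffling $\pi_i$ buys: right-composition by a uniform random permutation turns any deterministic, position-indexed concordant subset the LLM might emit---such as the window of positions the model attends to best, as in the ``lost-in-the-middle'' effect characterised in Section~\ref{sec:position-bias}---into a uniformly random subset of \emph{items} relative to $\sigma$, by invariance of the uniform measure on the symmetric group under composition. The phrasing ``for any input ranking $\pi$'' in the hypothesis already absorbs this invariance, so once it is made explicit the reduction is essentially bookkeeping.
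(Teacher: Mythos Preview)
Your proposal is correct and follows essentially the same route as the paper: reduce to Proposition~\ref{prop:denoise1} by observing that the shuffling step makes each $\hat{\sigma}_i=\sigma_\text{noise}^{(i)}\circ\pi_i$ satisfy the i.i.d.\ noisy-ranking hypotheses there, then invoke the Kemeny--Young consistency result directly. Your additional remark on how uniform right-composition converts position-indexed noise into item-indexed uniformity is a helpful elaboration the paper leaves implicit, but the overall argument is the same reduction.
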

\begin{proofsketch}
    Observe that the first shuffling stage of permutation self-consistency transforms the premises into those of Proposition~\ref{prop:denoise1}.
    Since the next stage of the method involves the same Kemeny--Young ranking as the proposition does, the rest of the proof quickly follows.
\end{proofsketch}
\noindent Full proofs are in Appendix~\ref{sec:proofs}.

\begin{table}[t]\small
    \setlength{\tabcolsep}{1.75pt}
    \centering
    \begin{tabular}{p{2.8in}}
    \toprule[1pt]
    \textbf{1. MathSort}: Sort ten arithmetic expressions by value. \\
    \midrule[0.1pt]
    \textit{Example}: \texttt{3 / 5}, \texttt{2 - 9}, \texttt{6 * 5}, \texttt{2 * 1}, \texttt{3 / 1}, \texttt{9 * 9}, \texttt{1 - 9}, \texttt{9 + 8}, \texttt{3 / 5}, \texttt{1 / 9}.\vspace{1mm}\\
    \midrule[1pt]
    \textbf{2. WordSort}: Order ten words alphabetically. \\
    \midrule[0.1pt]
    \textit{Example}: \texttt{aaron}, \texttt{roam}, \texttt{aardvark}, \texttt{nexus}, [...].\vspace{1mm}\\
    \midrule[1pt]
    \textbf{3. GSM8KSort}: Unscramble sentences from GSM8K. \\
    \midrule[0.1pt]
    \textit{Example}: Order the scrambled sentences logically:\\
    - \texttt{She took 1 hour to walk the first 4 miles} [...]\\
    - \texttt{Marissa is hiking a 12-mile trail.}\\
    - \texttt{If she wants her average speed to be 4} [...]\\
    \bottomrule[1pt]
    \end{tabular}
    \caption{Example prompts for our three sorting tasks.}
    \label{tab:ex4}
\end{table}

\section{Experiments}
We experiment on sorting and passage ranking, two distinct types of problems in listwise ranking.

\subsection{Sorting Tasks}

\parheader{Setup}
We build three functionally distinct datasets called MathSort, WordSort, and GSM8KSort, corresponding to numerical sorting, alphabetical ordering, and sentence arrangement, respectively.
For MathSort, the task is to sort ten random mathematical expressions of the form \texttt{digit op digit}, where \texttt{digit} is a single digit and 
\texttt{op} is one of \texttt{+}, \texttt{-}, \texttt{*}, or \texttt{/}.
In WordSort, the goal is to order ten random English words alphabetically.
Finally, GSM8KSort is a sentence-unscrambling task over the test set of the GSM8K reasoning dataset~\cite{cobbe2021training}.
For consistency and tractability, we use 100 examples in each dataset;\ see Table~\ref{tab:ex4} for prompts.

These synthetic sorting datasets have certain benefits.
The items are intrinsically comparable, especially in MathSort and WordSort, whose elements have unequivocal order (e.g., ``aardvark'' must precede ``abacus'' in WordSort).
On the other hand, passage ranking relies on human judgment, where label noise may confound findings.
Synthetic construction also enables control of item length:\ MathSort examples are fixed at three tokens, WordSort at a single word, and GSM8K one sentence.

For our LLMs, we choose the open families of LLaMA v2 models~\cite{touvron2023llama}, Mistral-7B Instruct~\cite{jiang2023mistral}, and Zephyr$_\beta$-7B~\cite{tunstall2023zephyr}, along with the closed GPT-3.5 (Turbo, the ``0613'' version) and GPT-4 from OpenAI, both the state of the art.
We apply permutation self-consistency with $m=20$ output rankings, resulting in 20 parallel calls to the LLM per example.
Detailed settings are in Appendix~\ref{sec:appendix:sorting-tasks}.

\begin{table}[t]\small
    \setlength{\tabcolsep}{1.5pt}
    \centering
    \begin{tabular}{lcccccccc}
    \toprule[1pt]
    \multirow{2}{*}{Method} & \multicolumn{2}{c}{\textsc{MathSort}} & \multicolumn{2}{c}{\textsc{WordSort}} & \multicolumn{2}{c}{\textsc{GSM8KSort}} \\
    \cmidrule(lr){2-3} \cmidrule(lr){4-5} \cmidrule(lr){6-7}
    & Orig. & PSC & Orig. & PSC  & Orig. & PSC  \\
    \midrule[1pt]
    Mistral-7B & 34.7 & \underline{52.9} & 55.3 & \underline{74.2} & 46.7 & \underline{65.3} \\
    Zephyr$_\beta$-7B & 13.2 & \underline{32.2} & 30.7 & \underline{60.8} & 34.5 & \underline{61.6} \\
    LLaMA$_2$-7B & 8.7 & \underline{24.2} & 41.3 & \underline{59.9} & 6.1 & \underline{21.3} \\
    LLaMA$_2$-13B & 16.7 & \underline{26.0} & 65.4 & \underline{78.8} & 42.7 & \underline{46.8} \\
    LLaMA$_2$-70B & 27.9 & \underline{31.3} & 74.6 & \underline{81.0} & 61.1 & \underline{71.2} \\
    GPT-3.5 & 64.0 & \underline{75.2} & 85.9 & \underline{88.1} & 82.1 & \underline{88.4}\\
    GPT-4 & 83.5 & \underline{\textbf{89.6}} & 89.9 & \underline{\textbf{92.0}} & 88.4 & \underline{\textbf{90.5}}\\
    \bottomrule[1pt]
    \end{tabular}
    \caption{Kendall tau correlation scores on our sorting tasks. Original scores are the median across 20 single runs, and PSC aggregates those 20. Underline indicates improvement from PSC and bold denotes best.}
    \label{tab:results-sort}
\end{table}

\begin{figure}[t]
    \centering
    \hspace{-2.35mm}\includegraphics[scale=0.355,trim={0.3cm 0.8cm 0.5cm 0.5cm},clip]{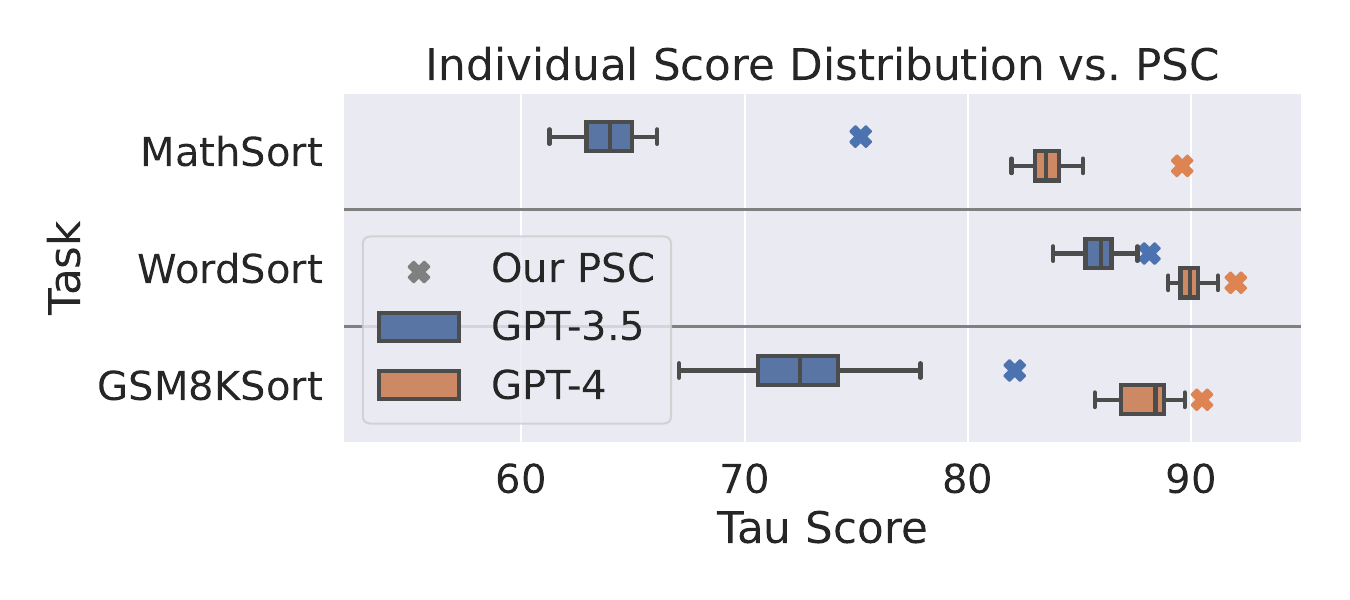}
    \caption{The distribution of sorting task scores from twenty individual runs plotted against our PSC score. Our PSC outperforms the best of any individual run.}\label{fig:sorting-results}
\end{figure}

\begin{table*}[t]\small
    \centering
    \begin{tabular}{lllcccc}
    \toprule[1pt]
    \multirow{2.3}{*}{First Stage} & \multirow{2.3}{*}{Top-$k$} &  \multirow{2.3}{*}{Method} & \multicolumn{2}{c}{\bf TREC-DL19} & \multicolumn{2}{c}{\bf TREC-DL20} \\[-0.5ex]
     \cmidrule(lr){4-5} \cmidrule(lr){6-7}
    & & & Original & Our PSC & Original & Our PSC \\
    \midrule[1pt]
    None & All & (1) BM25 & 50.58 & -- & 47.96 & -- \\
    & All & (2) SPLADE++ ED & 73.08 & -- & 71.97 & -- \\
    \midrule\\[-3ex]
    \multicolumn{7}{c}{Supervised Approaches}\\[-0.5ex]
    \midrule
    BM25 & 100 & (3) MonoT5 (T5-3B) & 71.83 & -- & 68.89 & -- \\
    & 100 & (4) RankT5 (T5-3B) & 71.22 & -- & 69.49 & -- \\
    & 100 & (5) RankLLaMA (13B) & 73.22 & -- & 70.38 & --\\
    \midrule\\[-3ex]
    \multicolumn{7}{c}{Unsupervised Approaches}\\[-0.5ex]
    \midrule
    BM25 & 100 & (6) PRP-Best (FLAN-T5-XXL) & 69.87 & -- & 69.85 & --\\
    & 100 & (7) PRP-Best (FLAN-UL2) & 72.65 & -- & 70.68 & --\\
    &  100 & (8) RankVicuna & 66.83 & \underline{68.70} & 65.49 & \underline{65.68} \\
    &  20 & (9) Single (GPT-3.5) & 60.95 (60.96) & \underline{61.49} & 57.64 (57.68) & \underline{59.62}  \\
    &  20 & (10) Single (GPT-4) &  60.88 (60.92) & \underline{64.88} & 57.78 (57.89) & \underline{62.49}  \\
    &  100 & (11) RankGPT (GPT-3.5) & 68.00 (68.13) & \underline{70.77} & 62.08 (63.20) & 62.70 \\
    & 100 & (12) RankGPT (GPT-4) & 75.00 (75.59) & \underline{\textbf{75.66}} & 70.36 (70.56) & \underline{\textbf{71.00}} \\
    \midrule
    SPLADE++ ED & 100 & (13) RankVicuna & 74.59 & 74.13 & 74.73 & 74.06 \\
    &  20 & (14) Single (GPT-4) & 73.21 (73.36) & \underline{\textbf{76.87}} & 71.97 (73.63) & \underline{\textbf{78.52}}\\
    & 100 & (15) RankGPT (GPT-4) & 74.64 (74.93) & \underline{76.01} & 70.76 (71.08) & \underline{75.14} \\
    \bottomrule[1pt]
    \end{tabular}
    \caption{nDCG@10 results on DL19 and 20. The maximum across three runs are in parentheses, while those outside the median. Improvements from PSC are underlined and best per section are bolded. On the one-tailed signed-rank test, paired differences between the original and PSC are significant at the 99\% confidence level ($p<0.01$).}
    \label{tab:results-ir}
\end{table*}

\parheader{Results}
We present our main results in Table~\ref{tab:results-sort}, naming our method ``PSC'' for short.
PSC consistently outperforms conventional inference on all three datasets and seven models by an average of 51\% in Kendall tau correlation, skewed toward the smaller variants.
Specifically, LLaMA$_2$-7B, 13B, and 70B attain average score increases of 157\%, 28\%, and 12\%, respectively, Mistral and Zephyr improve by 42\% and 106\%, and GPT-3.5 and GPT-4 by 3--18\% and 2--7\%.
We attribute this to the already high quality of the larger 70B and GPT models, which leave less room for improvement.
Task-wise, we improve MathSort, WordSort, and GSM8KSort by 67\%, 30\%, and 58\%, and gains negatively correlate with original quality ($r=-0.72$).
We conclude that PSC improves listwise ranking on sorting tasks, with higher gains on smaller models and more difficult tasks.

One foreseeable question is whether any individual runs surpass PSC, which would weaken the case for rank aggregation.
To answer this, we plot the distribution of the individual scores against PSC in Figure~\ref{fig:sorting-results}.
We observe that PSC reliably beats all individual runs by 1--12\%, improving the most on tasks and models with lower baseline quality, such as MathSort and GPT-3.5.
These findings bolster the necessity of the aggregation step.

\subsection{Passage Reranking Task}
For a longer-context task, we evaluate our method on passage reranking.
For a query and an initial list of relevant documents from a fast, first-stage retriever, we must reorder the documents so that more relevant ones come first.

\parheader{Setup}
We select the passage retrieval test sets from the TREC Deep Learning Tracks DL19 and DL20~(\citealp{craswell2020overview}, \citeyear{craswell2021overview}), both canon in the literature~\cite{qin2023large}.
These datasets are built on the MS MARCO v1 corpus~\cite{bajaj2016ms}, which contains 8.8 million passages.
As is standard, we rerank the top-100 passages retrieved by the first-stage BM25~\cite{robertson2009probabilistic} or SPLADE++ EnsembleDistill (ED;\ \citealp{formal2021splade}), reporting nDCG@10 scores for quality.

Like sorting, we pick an open LLM, RankVicuna~\cite{pradeep2023rankvicuna}, fine-tuned from Vicuna~\cite{vicuna2023}, and a closed family, GPT-3.5 and GPT-4---all models match state of the art.
RankVicuna and GPT-3.5 have context lengths of 4096, half of GPT-4's 8192.
We similarly apply permutation self-consistency with $m=20$ runs.
Furthermore, for three of our variants named ``single,'' we reduce the top-100 to 20 and discard the windowing strategy used in RankGPT and RankVicuna, described in Section~\ref{sec:psc}.
This allows us to fit all passages in a single call and thus remove potentially confounding interactions between the windowing method and permutation self-consistency.

\begin{figure*}[t]
    \centering
    \hfill
    \begin{subfigure}[t]{\columnwidth}
        \includegraphics[width=1.03\columnwidth,trim={0.0cm 0cm 20.5cm 0cm},clip]{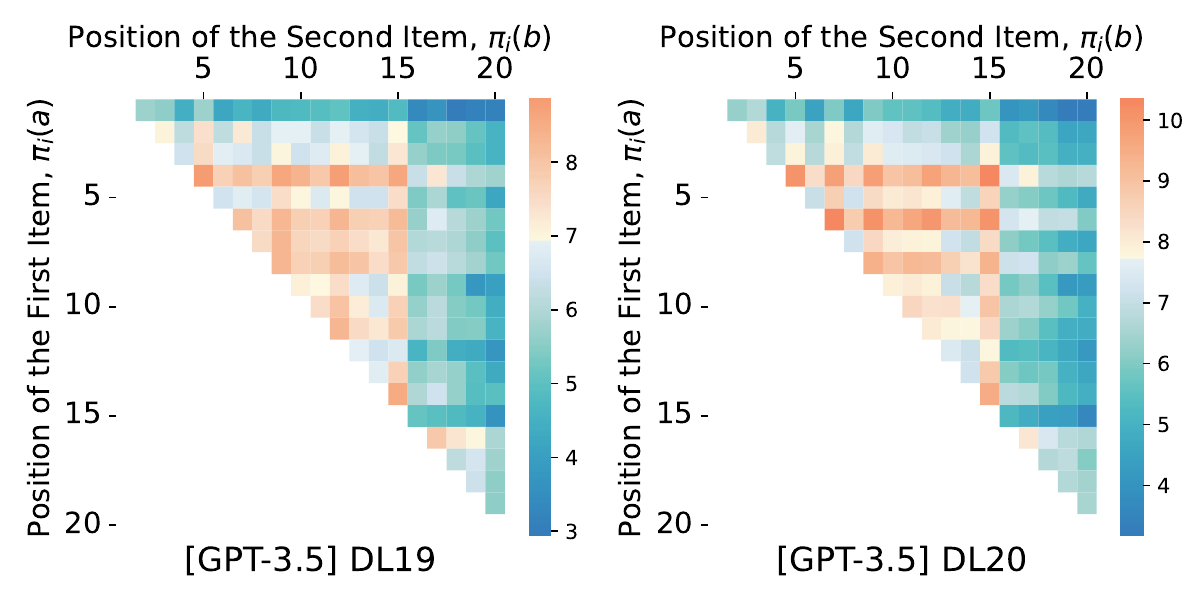}
         \caption{Single (GPT-3.5) on DL19 and DL20.}
         \label{fig:order-bias-a}
    \end{subfigure}
    \hfill
    \begin{subfigure}[t]{\columnwidth}
        \includegraphics[width=1.03\columnwidth,trim={20.5cm 0cm 0cm 0cm},clip]{assets/heatmap.output-order.flipped.blue.pdf}
         \caption{Single (GPT-4) on DL19 and DL20.}
         \label{fig:order-bias-b}
    \end{subfigure}
    \hfill
     
    \caption{
    Distribution of ``reversions'' after reranking. Blues are below the observed dataset average and reds above the average. For two input list positions $i \in [1, 20]$ and $j \in (i, 20]$, $i$ indexes the rows and $j$ the columns. For example, the cell at $(1, 2)$ is the reversion of the first two input items across the dataset.
    Note that highly saturated colors indicate over- and under-reversion \textit{relative} to other pairs in the dataset rather than in the absolute sense.
    }\label{fig:order-bias}
\end{figure*}

For our supervised baselines, we report results from the MonoT5~\cite{nogueira2020document} and RankT5~\cite{zhuang2023rankt5} models, based on the T5 language model~\cite{raffel2020exploring}.
We also run RankLLaMA~\cite{ma2023fine}, the current pointwise state of the art.
For the unsupervised baselines, we copy figures from the state-of-the-art pairwise ranking results across the variants in \citet{qin2023large}, which we name PRP-Best for short.

\parheader{Results}
We present our results in Table~\ref{tab:results-ir}.
Our PSC outperforms all conventional inference baselines:\ first, RankGPT with PSC on DL19 (row 12) edges ahead by 0.07 points (same row);\ second, the same for DL20 (row 12), leading PRP by 0.32 points (row 7);\ third, the overall top result on DL19 of 76.87 from SPLADE++ (row 14), outperforming the previous by 1.28 (row 12);\ and fourth, 78.52 on DL20 (row 14), a 3.79-point increase over RankVicuna (row 13), the best single-call baseline model.
For qualitative examples, see Appendix~\ref{sec:appendix:qualitative-examples}.

Overall, our PSC approach consistently improves ordinary decoding and beats the maximum individual score across three runs (see scores in parentheses), yielding gains on 13 out of 16 model--dataset combinations (see PSC columns in rows 7--14).
On average, RankVicuna, GPT-3.5, and GPT-4 see relative score increases of 0.4\%, 2\%, and 5\% with PSC.
Mixed results on RankVicuna likely result from its inherent robustness to positional bias, instilled by its training process that uses random shuffling as part of data augmentation;\ thus, the shuffling step from PSC has less of an effect on the output variation.

The choice of the first-stage reranker has a clear impact, with SPLADE++ adding an average of 7.26 points over the corresponding BM25 models.
In fact, reranking the top-20 SPLADE items (row 13) in a single call outperforms doing the top-100 (row 14) using a sliding call window.
We conjecture that this results from imperfections in the RankGPT windowing algorithm, which shows especially for strong retrievers, where the top-20 already contains many relevant documents.

Finally, we note one particularly intriguing phenomenon:\ in the top-20 single-call setting, GPT-3.5 and GPT-4 have similar baseline quality without PSC (rows 8 and 9, first column in each group), but PSC boosts GPT-4 more than GPT-3.5 (row 9, second columns).
As we explore in depth next, this possibly results from GPT-4 being more ``equally biased'' across the item positions and hence providing PSC more useful rankings for aggregation.

\parheader{Positional bias analysis}\label{sec:position-bias}
We analyze how list order bias varies with the input positions on the ``single'' GPT models for BM25 (from \autoref{tab:results-sort}, rows 8 and 9), which avoids confounds from RankGPT's window strategy.
The design of our analysis is as follows, mirroring Section~\ref{sec:psc}'s notation:\ consider the item pair $(X_a, X_b)$ with input list positions $(\pi_i(a), \pi_i(b))$, where $\pi_i(a) < \pi_i(b)$ for some random permutation $\pi_i$.
If the output positions satisfy $\hat{\sigma}_i(a) > \hat{\sigma}_i(b)$ after reranking,
we say the order is reversed, and we call the sum of reversed pairs per data point \textit{``reversions.''}
In \autoref{fig:order-bias}, we visualize the distribution of reversions by input position pair, with $\pi_i(a)$ as the $y$-axis and $\pi_i(b)$ as the $x$-axis, whose positions range from 1--20 for each of the top-20 passages.
For cross-model comparability, we normalize by dataset.

Under the null hypothesis of there being no positional bias, the distribution of reversions should be uniform because the input lists are randomly permuted, which severs any association between input order and output ranking.
However, \autoref{fig:order-bias} contradicts this.
Prominently, the center of \autoref{fig:order-bias-a} is redder than the edges,
indicating that pairs with both items closer to the middle are reversed more often by GPT-3.5 than those at the beginning and the end of the input lists are.
In \autoref{fig:order-bias-b}, bottom areas are also deeper red than the top, 
showing that pairs with items at the end of the list are more frequently reversed by GPT-4 than pairs at the start.

\begin{figure*}[t]
    \centering
    \hfill
    \begin{subfigure}[t]{\columnwidth}
        \centering
        \hspace{-2mm}\includegraphics[width=\columnwidth,trim={0.1cm 0.25cm 0.35cm 0cm},clip]{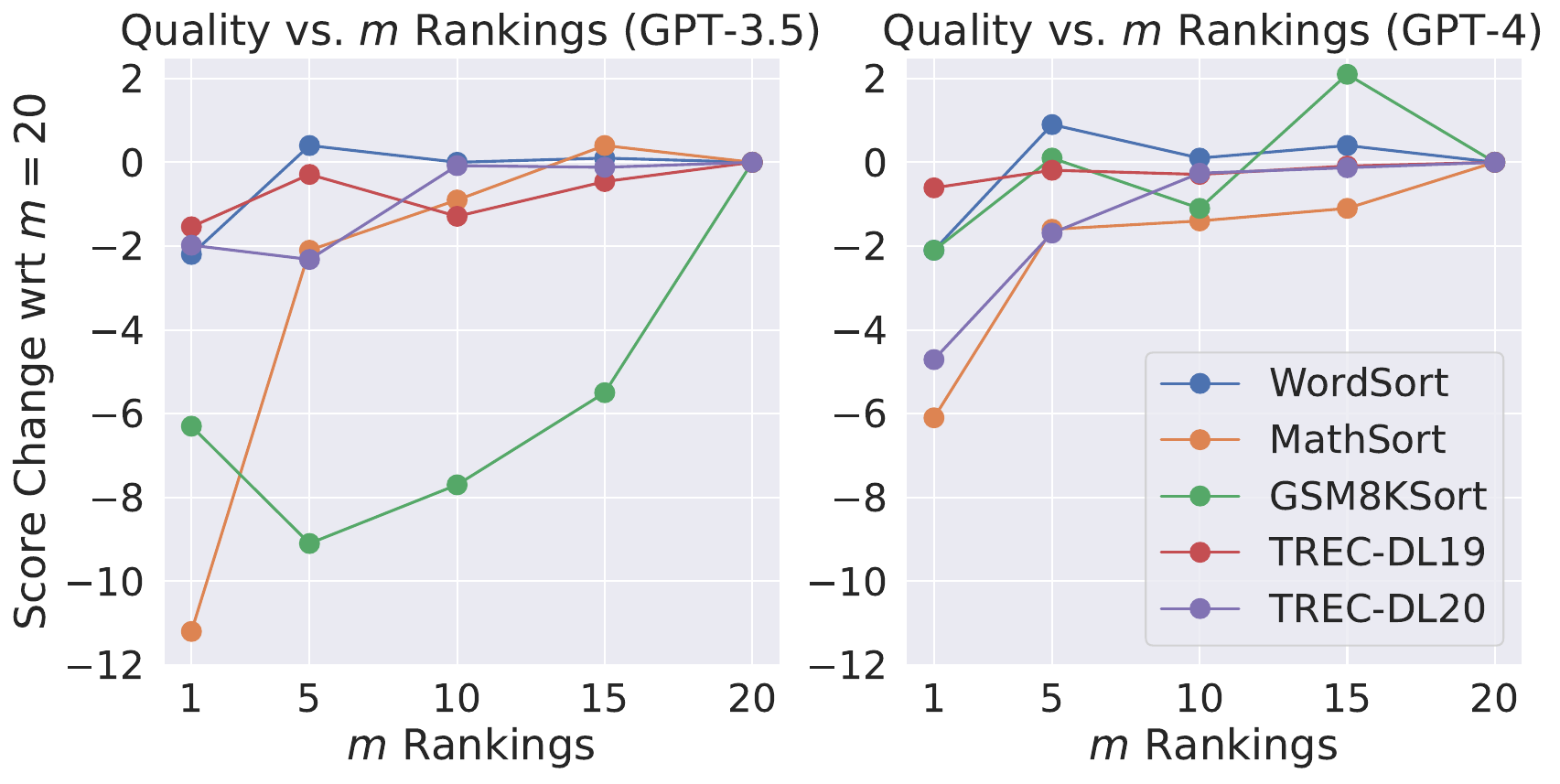}
        \caption{Quality vs.\ number of output rankings ($\rho=0.17$).}\label{fig:shuffle-results}
    \end{subfigure}
    \hfill
    \begin{subfigure}[t]{\columnwidth}
        \centering
        \hspace{-2mm}\includegraphics[width=\columnwidth,trim={0.1cm 0.25cm 0.35cm 0cm},clip]{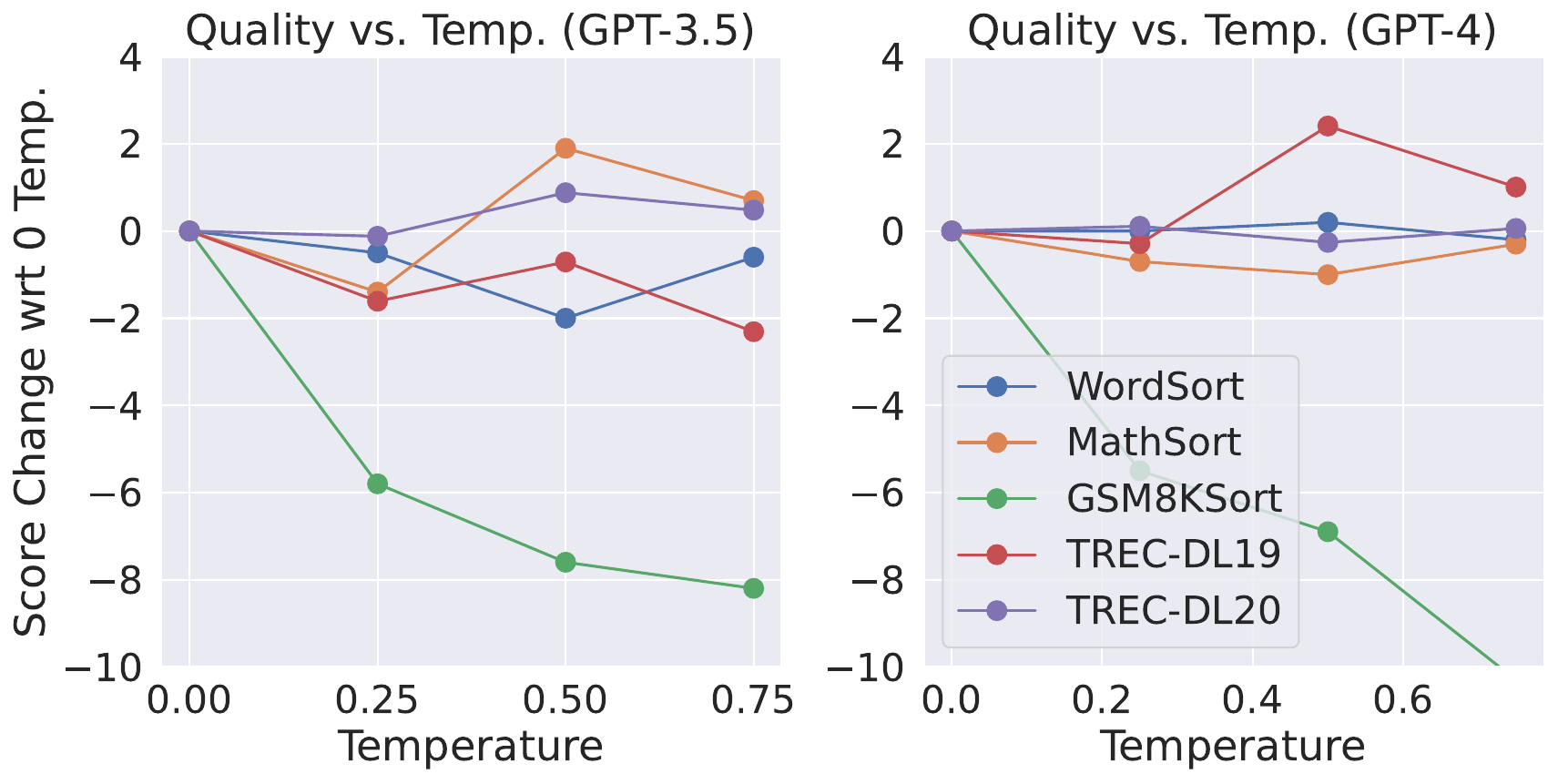}
        \caption{Quality vs.\ text generation temperature ($\rho=-0.078$).}\label{fig:temperature-results}
    \end{subfigure}
     \hfill
    \caption{Quality for all datasets for various aggregate sizes and temperatures. For output rankings, we use $m=20$ as our frame of reference;\ for temperature, $0.0$. In the subfigure captions, $\rho$ denotes Spearman's rank correlation.}\label{fig:hyperparameters}
\end{figure*}

Other subtle patterns emerge upon closer examination.
First, in \autoref{fig:order-bias-a}, a dark block appears after column 15, suggesting that GPT-3.5 does \textit{not} focus well on items past the fifteenth.
Second, the colors interleave in a grid pattern across both columns and rows---possibly an artifact of its pretraining.
From this evidence, we conclude that different positional biases exist in reranking LLMs, varying by model and dataset.

The analysis also helps to explain our quality results.
Comparing \autoref{fig:order-bias-a}~and~\ref{fig:order-bias-b},
we observe that \mbox{GPT-4} generally reverses more pairs than GPT-3.5 and is closer to the optimal number of reversals, thus providing higher quality to the aggregated rankings.
This may explain why PSC benefits GPT-4 (single) more than it does GPT-3.5 (single), i.e.\ row~9 vs.\ row~8 in \autoref{tab:results-ir}.
Similarly, both models tend to reverse more pairs on DL20 than on DL19, and results also indicate that PSC improves DL20 more than it does DL19.

\section{Sensitivity Analyses}
In this section, we investigate and characterize each component of permutation self-consistency to justify our modeling choices.

\subsection{Hyperparameter Studies}\label{sec:hyperparameter}
\parheader{Output rankings}
Throughout the paper, we espoused aggregating over $m=20$ output rankings, but is more actually better?
If, say, five outperformed twenty, we could decrease the number of parallel calls to the model, conceivably saving cost.
To answer this question, we sweep the aggregate size between one and twenty across all datasets, plotting the resulting score differences from using the default twenty.
We pick GPT-3.5 and GPT-4 as our target models, as they are used in all tasks.

We plot our results in Figure~\ref{fig:shuffle-results}.
On both models, we find that output quality rapidly converges to that of using the full twenty, five being 67\% as effective on average.
The score averages increase monotonically with the number of rankings ($\rho=0.17$), with GSM8KSort on GPT-3.5 as an outlier (left subplot), possibly because of output variance---the next study on sampling temperature shows that it is highly sensitive to randomness.
We conclude that picking $m=20$ output rankings is effective, though returns sharply diminish after 5--10.

\parheader{Sampling temperature}
Self-consistency~\cite{wang2023selfconsistency} uses temperature as their sampling strategy to produce different outputs to aggregate over, but it is ineffective for us, perhaps because listwise ranking does not admit multiple reasoning paths like chain-of-thought prompting does.
To assess this rigorously, we vary the temperature between 0 and 0.75, following the original method's 0.5--0.7~\cite{wang2023selfconsistency}.
For consistency, we use the same setup from before and fix $m=20$.

We plot our results in Figure~\ref{fig:temperature-results}.
Temperature has little effect on the quality ($\rho=-0.078$), again with GSM8KSort as an outlier, where the extra randomness drastically hurts quality on both models.
This sensitivity to randomness is also evident in Figure~\ref{fig:sorting-results}, where GSM8K has the widest interquartile range of the tasks.
In conclusion, this evidence grounds our choice of not using temperature.

\subsection{Rank Aggregation Comparison}
\begin{figure}[t]
    \centering
    \hspace{-2mm}\includegraphics[width=0.93\columnwidth,trim={0.1cm 0.25cm 0.35cm 0cm},clip]{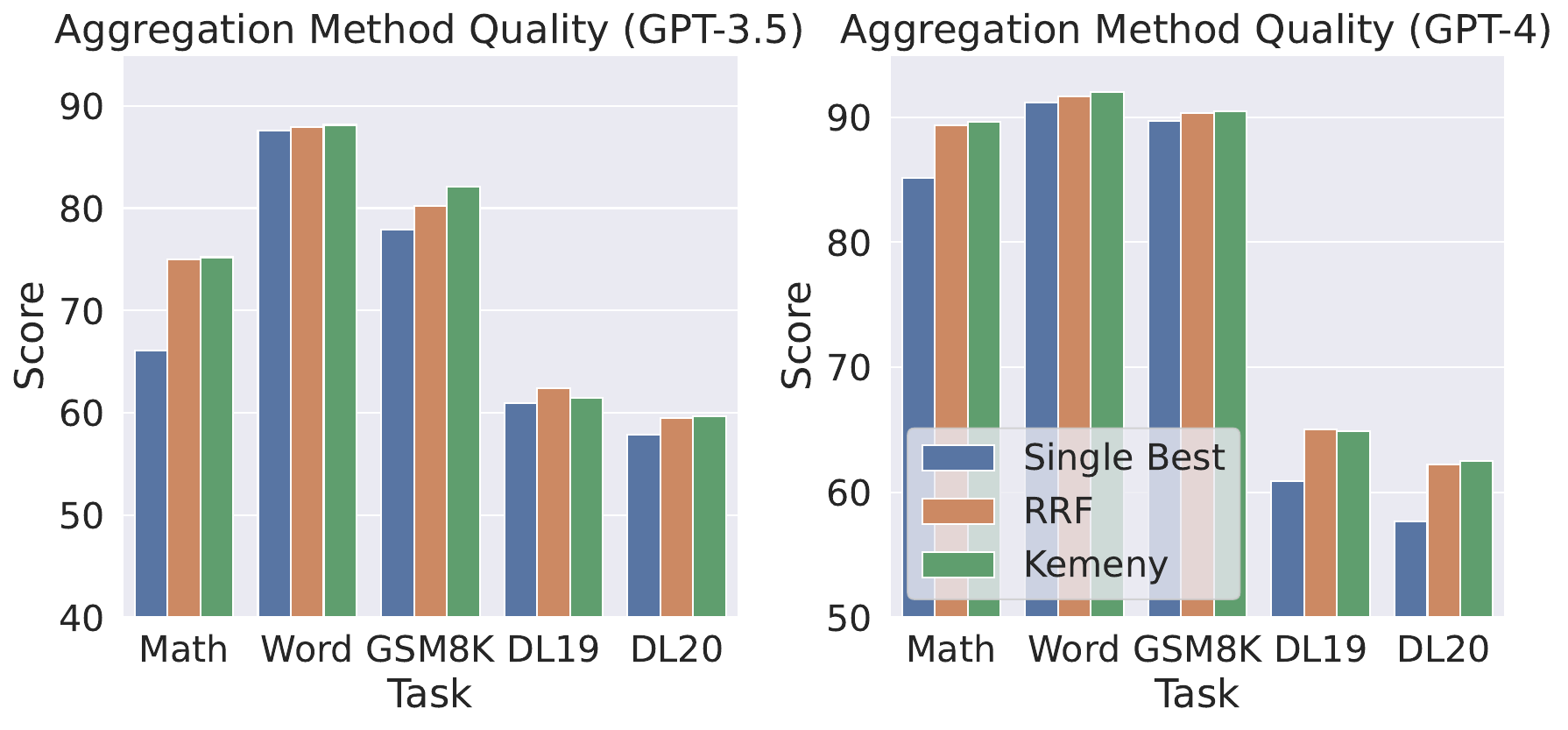}
    \caption{Scores for the alternative reciprocal rank fusion (RRF) and our Kemeny rank aggregation method.}\label{fig:aggregation-results}
\end{figure}
Reciprocal rank fusion (RRF;~\citealp{cormack2009reciprocal}) is a state-of-the-art alternative to our chosen Kemeny ranking method.
It sorts items by the score
\begin{equation}
    \text{RRFScore}(X_j) := \sum_{1 \leq i \leq m} \frac{1}{k + \hat{\sigma}_i(j)}
\end{equation}
for each item $X_j$, rankings $\hat{\sigma}_i$, and $k=60$.
RRF had been under our consideration, but we picked Kemeny ranking for its theoretical robustness and empirical effectiveness.
Shown in Figure~\ref{fig:aggregation-results}, Kemeny beats RRF ($p<0.05$) on 8 out of 10 comparisons by a mean of 0.23 points;\ on average, RRF reaches only 93.5\% of the boost that Kemeny does.
Its only outperformance on DL19 possibly results from it being suited for information retrieval, its field of origin, but this may also be statistical noise.
Overall, these results further support our decision to select Kemeny ranking for the aggregation step.

\section{Related Work and Future Directions}
The holistic direction of our work is in enhancing the ranking ability of large language models.
Along a similar vein, contrast-consistent ranking~\cite{stoehr2023unsupervised} proposes to train order-unaware probes on the latent vectors of large language models for detecting nondirectional rank consistency.
Their evaluation reveals the ranking direction of test examples to the models, deviating from standard practices, as their purpose is not to increase ranking quality but rather to detect consistency.
Another related work is \citet{hou2024large}, which uses a different rank aggregation algorithm from ours.
In contrast to their heuristic bootstrapping method (i.e., Borda count) of summing up the ranks of each ranking, our approach is theoretically optimal in that it finds the best central ranking to all individual rankings in terms of the tau distance.

The specific empirical tasks in this paper have also seen recent progress.
For passage ranking using language models, BERT-based~\cite{devlin2019bert, nogueira2020document} and T5-tuned~\cite{zhuang2023rankt5, raffel2020exploring} approaches represent the earliest language models for passage ranking.
RankGPT~\cite{sun2023chatgpt} and LRL~\cite{ma2023zero} spearheaded much of the post-ChatGPT work, beating the supervised state of the art with an \textit{unsupervised} LLM for the first time.
Along a non-listwise direction, PRP~\cite{qin2023large} is a pairwise method leveraging open-source large language models comparing two items at a time, as reported in Table~\ref{tab:results-ir}.
One possible future work is to reformulate our PSC method to be differentiable, enabling training-time application in LLMs such as RankVicuna~\cite{pradeep2023rankvicuna}.

Our sorting tasks for LLMs have had attention as well, mostly in the context of evaluation, with BigBench~\cite{suzgun2022challenging, srivastava2023beyond}, an LLM benchmark, providing more than 200 distinct tasks, including one in alphabetical ordering (\texttt{word\_sorting}), which we enlarge and expand on in WordSort.
\citet{stoehr2023unsupervised} also constructed fact-based synthetic sorting datasets for listwise ranking, but they are private and hence noncomparable.
In the future, PSC can be applied to any list-oriented ranking task involving LLMs.
Examples include using LLMs for evaluation~\cite{wang2023large} and annotating human feedback judgments with language models.
Additionally, PSC is applicable at training time, such as denoising weakly labeled training sets generated by teacher models, shown to be crucial to the success of listwise-ranking LLMs~\cite{pradeep2023rankvicuna}.

We are not the first to establish positional biases in LLMs.
\citet{lu2022fantastically} are among the earliest to relate prompt order to the quality of in-context learning.
The main difference in setup is that they assume the presence of a training set, whereas we do not, which especially matters for passage ranking, as many tasks only have evaluation sets.
Recently, \citet{liu2023lost} and \citet{wang2023large} characterized positional bias in the context of list-oriented tasks, such as question answering and response evaluation.
However, we are to our knowledge the first to characterize the position biases of passage-ranking LLMs with respect to pairwise item positions, and our work also proposes a correction technique.
Moreover, \citet{pezeshkpour2023large} and \citet{li2023prd} apply prompting-based techniques for mitigating positional bias.
Prompting is not mutually exclusive of our PSC, and it could be complementary.

Lastly, our paper is connected to all the meta-algorithms for improving LLM generation.
As a pertinent example, \citet{lu2022fantastically} study prompt order on in-context learning classification tasks, proposing an entropy-based statistic over development sets to find performant permutations of few-shot examples.
\citet{aggarwal2023let} make self-consistency more efficient, halting the procedure when enough samples have been collected.
To keep our method in its simplest form, as self-consistency had not been applied to listwise ranking to begin with, we based our design on the original approach~\cite{wang2023selfconsistency}.

\section{Conclusions}
We introduce a novel decoding method to improve the ranking ability of black-box LLMs by mitigating potential sensitivities and biases to list item order.
We intervene on prompt list order to produce multiple rankings then return an aggregated statistic as the prediction, which has less association with list order.
Theoretically, we prove the robustness of our method to arbitrary, fixed noise distributions.
Empirically, our method consistently improves upon ordinary decoding on all 15 of our sorting model--dataset combinations and 13 out of 16 of our passage reranking ones.
Finally, our sensitivity analyses justify our design choices of 20 output rankings, zero sampling temperature, and the Kemeny ranking method.

\section*{Limitations}
We share limitations with those of the original self-consistency paper~\cite{wang2023selfconsistency}.
We use multiple LLM calls, potentially to a commercial LLM, which would raise financial cost.
Thus, practical applications may require careful weighing of quality gain against elevated expense.
Nevertheless, a few calls already help, and returns rapidly diminish past 5--10 calls.
We note that our method does not in practice increase latency by much, since all calls can be parallelized, and aggregation time does not rise with the number of samples.
For further discussion, see Appendix~\ref{sec:appendix:computational-burden}.

Another limitation is that GPT-3.5 and GPT-4 are proprietary models lacking official documentation of its internals.
We acknowledge that this is an ongoing issue in the natural language processing literature as of 2023, with many publications relying on the continued existence of these endpoints.
To partially alleviate this, we have run experiments on the open-source Mistral~\cite{jiang2023mistral}, Zephyr~\cite{tunstall2023zephyr}, LLaMA 2~\cite{touvron2023llama}, and RankVicuna~\cite{pradeep2023rankvicuna} models where possible.

Finally, our study is intentionally restricted to automated evaluation in an academic setting.
Kendall's tau and nDCG@10, while standard metrics in evaluating ranking systems, do not exactly capture human preferences.
It remains to be determined how effective permutation self-consistency is for, say, an in-production web search engine or recommendation system.

\bibliography{anthology}

\appendix
\section{Proofs of Propositions}
\label{sec:proofs}
\begin{proposition}[2.1]
Let there be a true ranking $\sigma$ and a sequence of i.i.d.\ uniformly noisy rankings $\hat{\bs\sigma} := \{\hat{\sigma}_i\}_{i=1}^m$.
Suppose each noisy ranking $\hat{\sigma}_k$ has a uniformly random, nonempty concordant subset $S'_k$ with $\sigma$, and the remaining rank elements not in $S'_k$ represent a random permutation.
Then the Kemeny--Young ranking $\bar{\sigma}$ of $\hat{\bs\sigma}$ converges in probability to $\sigma$, i.e., it is a consistent estimator.
\end{proposition}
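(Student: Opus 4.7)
The plan is to exploit the pairwise decomposition of Kendall tau distance. Since
\[
d_\kappa(\sigma',\hat\sigma_k) = \sum_{i<j} \mathbf{1}[\sigma'\ \text{and}\ \hat\sigma_k\ \text{disagree on}\ \{i,j\}],
\]
the Kemeny objective $\sum_k d_\kappa(\hat\sigma_k,\sigma')$ splits as a sum over unordered pairs of per-pair discordance counts. Consequently, whenever some ranking agrees with the majority vote on \emph{every} pair, that ranking is globally Kemeny-optimal. My strategy is to show that with probability tending to $1$ the true ranking $\sigma$ wins the majority vote on every pair, forcing $\bar\sigma = \sigma$.

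Next, I would estimate the per-sample concordance probability
$p_{ij} := \pee(\hat\sigma_k\ \text{agrees with}\ \sigma\ \text{on}\ \{i,j\})$ by conditioning on the random concordant subset $S'_k$. Three cases arise: if both $i,j \in S'_k$, concordance holds by definition of a concordant subset; if neither is in $S'_k$, their relative order is uniform and concordance holds with probability $1/2$; if exactly one lies in $S'_k$, a swap-symmetry argument applied to the uniform insertion of non-$S'_k$ elements again gives $1/2$. Because $S'_k$ is uniform over nonempty subsets of $\{1,\dots,n\}$, one has $\pee(\{i,j\}\subseteq S'_k) = 2^{n-2}/(2^n-1) > 0$, which yields $p_{ij} \geq 1/2 + c_n$ for a constant $c_n>0$ depending only on $n$.

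I would then apply Hoeffding's inequality to the i.i.d.\ Bernoulli indicators $Z_k^{(ij)} := \mathbf{1}[\hat\sigma_k\ \text{concordant with}\ \sigma\ \text{on}\ \{i,j\}]$, obtaining $\pee(A_{ij}) = \pee(\sum_k Z_k^{(ij)} \leq m/2) \leq \exp(-2mc_n^2)$, matching the $\exp(-O(m))$ bound advertised in the sketch. A union bound over the $\binom{n}{2}$ pairs then gives $\pee(\bigcup_{i<j} A_{ij}) \leq \binom{n}{2}\exp(-2mc_n^2) \to 0$ as $m\to\infty$. On the complementary event every pairwise majority is concordant with $\sigma$, so by the decomposition above $\bar\sigma = \sigma$; this is precisely convergence in probability.

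The main obstacle I anticipate is the mixed case (exactly one of $i,j$ in $S'_k$). The proposition only says that elements outside $S'_k$ ``represent a random permutation,'' and one must make the induced joint distribution on the interleaving of $S'_k$ and its complement precise before declaring the mixed-pair concordance probability to be exactly $1/2$. The natural formalization is that the non-$S'_k$ positions are filled by a uniformly random permutation interleaved uniformly with the order-preserving $S'_k$ skeleton, under which a swap symmetry gives $1/2$; this is the one step that warrants careful justification rather than assertion, and it is also the only place where the model assumption is genuinely used beyond the trivial lower bound $p_{ij}\geq \pee(\{i,j\}\subseteq S'_k)$.
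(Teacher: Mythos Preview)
Your proposal is correct and follows essentially the same route as the paper: decompose the Kemeny objective pairwise, show each pair's discordance probability is bounded strictly below $1/2$, apply Hoeffding per pair, and union-bound over $\binom{n}{2}$ pairs. Your treatment is in fact more explicit than the paper's, which simply asserts $p_k \le \tfrac{1}{2}-\delta$ without the three-case conditioning on $S'_k$ or the explicit constant $c_n = 2^{n-2}/(2^n-1)$; the subtlety you flag in the mixed case is real and is precisely the step the paper glosses over.
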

\begin{proof}
    Our strategy is to upper-bound the probability that the number of discordant pairs between the predicted rankings and the true ranking is greater than the number of concordant pairs, then show that this upper bound approaches zero with enough samples.
    Since the Kemeny-optimal ranking is defined as the ranking that exactly minimizes the difference between these discordant and concordant pairs, the probability that the Kemeny ranking is equivalent to the true ranking shares this upper bound.
    
    Let $A_{ij}$ be the event that the sum of discordant pairs indexed by $i$ and $j$ between $\bs\hat{\bs\sigma}$ and $\sigma$ is greater than the concordant ones, i.e.,
    \begin{align*}
        &A_{ij} := \sum_{k=1}^m \text{disc}_{ij}(\sigma, \hat{\sigma}_k) > \sum_{k=1}^m \text{conc}_{ij}(\sigma, \hat{\sigma}_k)\\
        &= \sum_{k=1}^m \text{disc}_{ij}(\sigma, \hat{\sigma}_k) > m - \sum_{k=1}^m \text{disc}_{ij}(\sigma, \hat{\sigma}_k)\\
        &= \sum_{k=1}^m \text{disc}_{ij}(\sigma, \hat{\sigma}_k) > \frac{m}{2},
    \end{align*}
    with convenience functions $\text{disc}_{ij}(\sigma_1, \sigma_2) := \mathbb{I}((\sigma_1(i) > \sigma_2(j) \wedge \sigma_1(i) < \sigma_2(j)) \vee (\sigma_1(i) < \sigma_2(j) \wedge \sigma_1(i) > \sigma_2(j)))$ and $\text{conc}_{ij} := 1 - \text{disc}_{ij}(\sigma_1, \sigma_2)$ indicating pair discordance and concordance according to the Kendall tau criterion. The LHS of the event also defines a sum of independent random variables (r.v.), each a Bernoulli distribution
    \begin{equation}
        X_k = \text{disc}_{ij}(\sigma, \hat{\sigma}_k) \sim \text{Bernoulli}(p_k).
    \end{equation}
    This is evident because each summand is a binary outcome, which is independent based on the given premise that $\hat{\bs\sigma}$ and concordant subsets $\{S'_k\}_{k=1}^m$ are each independent.
    The probability of $A_{ij}$ can be equivalently stated as
    \begin{equation}
        \mathbb{P}(A_{ij}) = \mathbb{P}(\sum_{k=1}^m X_k > \frac{m}{2}).
    \end{equation}
    We upper-bound the RHS.
    First, since $S'_k$ is non-empty and $\hat\sigma_k$'s elements not in $S'_k$ are uniformly random (as given by the premise), the chance of drawing a discordant ranking is $p_k \leq \frac{1}{2} - \delta$ for some $\delta > 0$.
    Thus, 
    \begin{align}
        \mathbb{P}(\sum_{k=1}^m X_k > \frac{m}{2}) &\leq \mathbb{P}(\sum_{k=1}^m X > \frac{m}{2})\\
        &=\mathbb{P}(\frac{1}{m}\sum_{k=1}^m X > \frac{1}{2}),
    \end{align}
    where $X \sim \text{Bernoulli}(\frac{1}{2} - \delta)$.
    By Hoeffding's inequality, we have for all $\epsilon > 0$
    \begin{equation*}
        \mathbb{P}\left(\frac{1}{m}\sum_{k=1}^m X - (\frac{1}{2} - \delta) > \epsilon\right) \leq \exp(-2m\epsilon^2).
    \end{equation*}
    Let $\epsilon = \delta$.
    Then
    \begin{align}\small
        &\mathbb{P}\left(\frac{1}{m}\sum_{k=1}^m X - (\frac{1}{2} - \delta) > \delta\right)
        \\&=\mathbb{P}\left(\frac{1}{m}\sum_{k=1}^m X - \frac{1}{2} > 0\right)
        \\&=\mathbb{P}\left(\frac{1}{m}\sum_{k=1}^m X > \frac{1}{2}\right)
        \\&=\mathbb{P}(A_{ij})
        \\&\leq\exp(-2m\delta^2).
    \end{align}
    We now consider the probability of any $A_{ij}$, i.e., the probability that the sum of discordant pairs indexed by \textit{any} $i$ and $j$ between $\hat{\bs\sigma}$ and $\sigma$ is greater than the concordant ones.
    By the union bound,
    \begin{align}
        \mathbb{P}(\bigcup_{i< j}A_{ij}) &\leq \sum_{i< j}\mathbb{P}(A_{ij})\\
        &\leq \binom{n}{2}\exp(-2m\delta^2)\\
        &\leq n^2\exp(-2m\delta^2).
    \end{align}
    Taking $m \to \infty$, the RHS $=0$.
    Since the Kemeny-optimal ranking always chooses the ranking that minimizes pairwise discordance (picking any other ranking would increase the Kendall tau distance, a contradiction with the definition of Kemeny optimality), for $m \to \infty$ we recover the true ranking with probability $1$, completing our proof that it is a consistent estimator.
\end{proof}%

\begin{proposition}[2.2]
Let there be a true ranking $\sigma$ and a distribution of noisy rankings $\pee(\sigma_\text{noise})$, where $\sigma_\text{noise}\circ\pi$ always has a uniform, non-empty concordant subset $S$ with $\sigma$ for any input ranking $\pi$, and the elements not in $S$ are uniformly random.
Then the permutation self-consistency procedure is a consistent estimator of $\sigma$ when applied to the input $\pi$ and the ``LLM'' characterized by $\pee(\sigma_\text{noise})$.
\end{proposition}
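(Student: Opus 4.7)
The plan is to reduce Proposition 2.2 directly to Proposition 2.1 by showing that the shuffle step of permutation self-consistency converts the input--conditional hypothesis of Proposition 2.2 into the i.i.d.\ unconditional hypothesis that Proposition 2.1 requires. Once that reduction is justified, the conclusion---convergence in probability of the Kemeny--Young aggregate to the true $\sigma$---transfers verbatim.

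Concretely, I would first write each output of the algorithm as $\hat{\sigma}_i = \sigma_{\text{noise},i} \circ \pi_i$, where $\pi_i$ is the uniform random permutation drawn in the shuffle step (Equation~\ref{eqn:shuffle}) and $\sigma_{\text{noise},i} \sim \pee(\sigma_\text{noise})$ is the LLM noise on the $i$-th call, drawn independently across calls. Next, I would invoke the hypothesis of Proposition 2.2 pointwise in $\pi_i$: because $\sigma_{\text{noise},i} \circ \pi_i$ admits a uniform, nonempty concordant subset $S_i$ with $\sigma$ for \emph{every} realization of $\pi_i$, and the complement is uniformly random, the same properties hold unconditionally after marginalizing over $\pi_i$. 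Independence of $\{\hat{\sigma}_i\}_{i=1}^m$ then follows from the joint independence of the $\pi_i$ and the independence of the noise draws $\sigma_{\text{noise},i}$ across calls. At that point, the sequence $\{\hat{\sigma}_i\}_{i=1}^m$ satisfies the premises of Proposition 2.1 verbatim, and a direct appeal to that proposition on the aggregation stage---which is the same Kemeny--Young step---completes the argument.

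The main obstacle I anticipate is making precise the sense in which a conditional uniform concordant subset (one guaranteed for each fixed $\pi$) yields the unconditional uniform concordant subset assumed in Proposition 2.1. I would handle this by a short bookkeeping argument showing that if the conditional law of $S_i \mid \pi_i$ is uniform on some family of nonempty subsets, then so is its marginal, because uniformity on a fixed family is preserved under mixtures. A secondary subtlety is ensuring that the LLM's noise distribution may depend on the input $\bs X[\pi_i]$; the premise of Proposition 2.2 is stated for \emph{any} input ranking $\pi$, so this input-dependence is already absorbed, and nothing further is needed beyond observing that the shuffle decouples the true ranking from the prompt order presented to the model.
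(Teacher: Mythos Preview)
Your proposal is correct and follows essentially the same route as the paper: reduce Proposition~2.2 to Proposition~2.1 by observing that the uniform shuffle step turns the per-$\pi$ guarantee into the i.i.d.\ hypothesis of Proposition~2.1, and then invoke the Kemeny--Young consistency result directly. If anything, your treatment of the marginalization of the concordant subset and the independence across calls is more explicit than the paper's own argument, which simply asserts that letting $\hat{\bs\sigma}$ be realizations of $\sigma_{\text{noise}}\circ\pi$ fulfills the premise of Proposition~2.1.
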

\begin{proof}
Our technique is to show that the premises of Proposition 2.2 can be transformed to those of 2.1, which we have a proof for.
Let $\pi$ be drawn uniformly at random from the sample space of all permutations, $\Omega$, as in the first step of the permutation self-consistency procedure.
From the premise of both the concordant subset $S$ of $\sigma_\text{noise}\circ\pi$ and its complement $S^C$ being uniformly random, letting $\hat{\bs\sigma}$ be realizations of $\sigma_\text{noise} \circ \pi$ fulfills the premise for Proposition~\ref{prop:denoise1}.
The rest of our proof follows from that of 2.1.
\end{proof}

\section{Detailed Experimental Setup}
\begin{table}[t]\small
    \setlength{\tabcolsep}{1.75pt}
    \centering
    \begin{tabular}{p{2.8in}}
    \toprule[1pt]
    \textbf{1. MathSort}: Sort ten arithmetic expressions by value. \\
    \midrule[0.1pt]
    \texttt{<User>} Sort the following expressions from smallest to largest: \texttt{3 / 5}, \texttt{2 - 9}, \texttt{6 * 5}, \texttt{2 * 1}, \texttt{3 / 1}, \texttt{9 * 9}, \texttt{1 - 9}, \texttt{9 + 8}, \texttt{3 / 5}, \texttt{1 / 9}. The output format should be a comma-separated list containing the exact expressions;\ do not reduce them. Only respond with the results;\ do not say any word or explain.\vspace{1mm}\\
    \midrule[1pt]
    \textbf{2. WordSort}: Order ten words alphabetically. \\
    \midrule[0.1pt]
    \texttt{<User>} Order these words alphabetically:\ \texttt{aaron}, \texttt{roam}, \texttt{aardvark}, \texttt{nexus}, [...]. The output format should [...]\vspace{1mm}\\
    \midrule[1pt]
    \textbf{3. GSM8KSort}: Unscramble sentences from GSM8K. \\
    \midrule[0.1pt]
    \texttt{<User>} Order the scrambled sentences logically:\\
    - \texttt{She took 1 hour to walk the first 4 miles} [...]\\
    - \texttt{Marissa is hiking a 12-mile trail.}\\
    - \texttt{If she wants her average speed to be 4} [...]\\
    The output format should have each sentence on a new line. Only respond with the results;\ do not say any [...]\\
    \bottomrule[1pt]
    \end{tabular}
    \caption{Full prompts for our three sorting tasks. ``\texttt{<User>}'' is a model-specific prefix token qualifying the subsequent message as belonging to the user for instruction prompting.}
    \label{tab:ex4-detailed}
\end{table}
\subsection{Computational Environment}\label{sec:appendix:computational-environment}
We conducted the experiments on a machine running Ubuntu 22.04 with two Nvidia A6000 GPUs, an AMD Epyc Milan 7B13 CPU, and 256GB of ECC RAM.
Our most relevant software frameworks included PyTorch 2.1.0, Transformers 4.36.1, PuLP 2.7.0, and CUDA 12.2.
Where possible, we used FlashAttention v2~\cite{dao2023flashattention, dao2022flashattention} and BF16 to accelerate the LLMs.

\subsection{Sorting Tasks}\label{sec:appendix:sorting-tasks}
\autoref{tab:ex4-detailed} lists the full prompts used in our sorting tasks.
To extract the rankings, we examined the outputs and wrote regular expressions;\ all the models capably generated well-formed, extractable text, in line with the claims in their papers~\cite{tunstall2023zephyr, jiang2023mistral, touvron2023llama}.
All prompts fit in a context size of 4096 tokens.
\begin{table}[t]\small
    \setlength{\tabcolsep}{1.75pt}
    \centering
    \begin{tabular}{p{2.8in}}
    \toprule[1pt]
    \textbf{RankVicuna}: Prompt from \citet{pradeep2023rankvicuna}. \\
    \midrule[0.1pt]
    \texttt{<User>} I will provide you with \{num\} passages, each indicated by a numerical identifier []. Rank the passages based on their relevance to the search query: \{query\}.\\\\
    
    [1] \{passage 1\}\\
    
    [2] \{passage 2\}\\
    ...\\
    
    [\{num\}] \{passage \{num\}\}\\\\
    Search Query: \{query\}.\\
    Rank the \{num\} passages above based on their
    relevance to the search query. All the passages
    should be included and listed using identifiers, in
    descending order of relevance. The output format
    should be [] > [], e.g., [4] > [2]. Only respond
    with the ranking results, do not say any word
    or explain.\vspace{1mm}\\
    \midrule[1pt]
    \textbf{RankGPT}: Prompt from \citet{sun2023chatgpt}. \\
    \midrule[0.1pt]
    \texttt{<System>} You are RankGPT, an intelligent assistant that can rank passages based on their relevancy to the query.\\[0.5ex]
    \texttt{<User>} I will provide you with \{num\} passages, each indicated by number identifier []. \\Rank the passages based on their relevance to query: \{query\}.\\\\
    
    [1] \{passage 1\}\\
    
    [2] \{passage 2\}\\
    ...\\
    
    [\{num\}] \{passage \{num\}\}\\\\
    Search Query: \{query\}. \\Rank the \{num\} passages above based on their relevance to the search query. The passages should be listed in descending order using identifiers. The most relevant passages should be listed first. The output format should be [] > [], e.g., [1] > [2]. Only response the ranking results, do not say any word or explain.\vspace{1mm}\\
    \bottomrule[1pt]
    \end{tabular}
    \caption{Full prompts for our passage reranking task. ``\texttt{<User>}'' and ``\texttt{<System>}'' are model-specific prefix tokens denoting the user and system roles. Nuances between RankVicuna and RankGPT include grammatical changes and no system prompt.}
    \label{tab:passage-rerank-detailed}
\end{table}

\begin{figure*}[t!]
    \begin{subfigure}[t]{1\columnwidth}
        \includegraphics[width=\columnwidth,trim={0cm 1.15cm 0cm 0cm},clip]{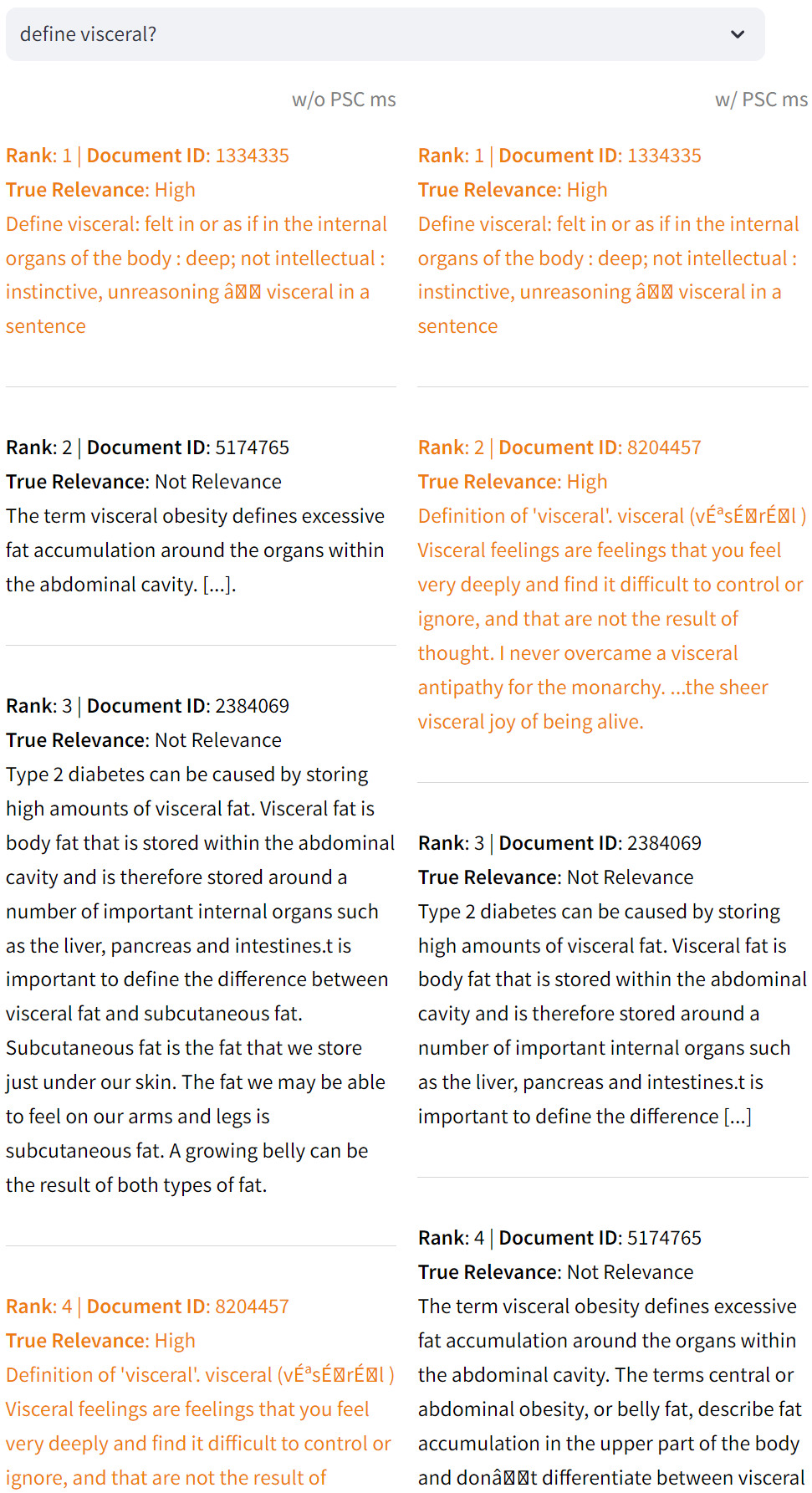}
        \caption{Results reranked by GPT-3.5. Both PSC and conventional inference rank the first document the same, but PSC correctly ranks document \#8204457 higher (second vs.\ fourth).}\label{fig:appendix:ex2}
    \end{subfigure}
    \hfill
    \begin{subfigure}[t]{1\columnwidth}
        \centering
        \includegraphics[width=\columnwidth]{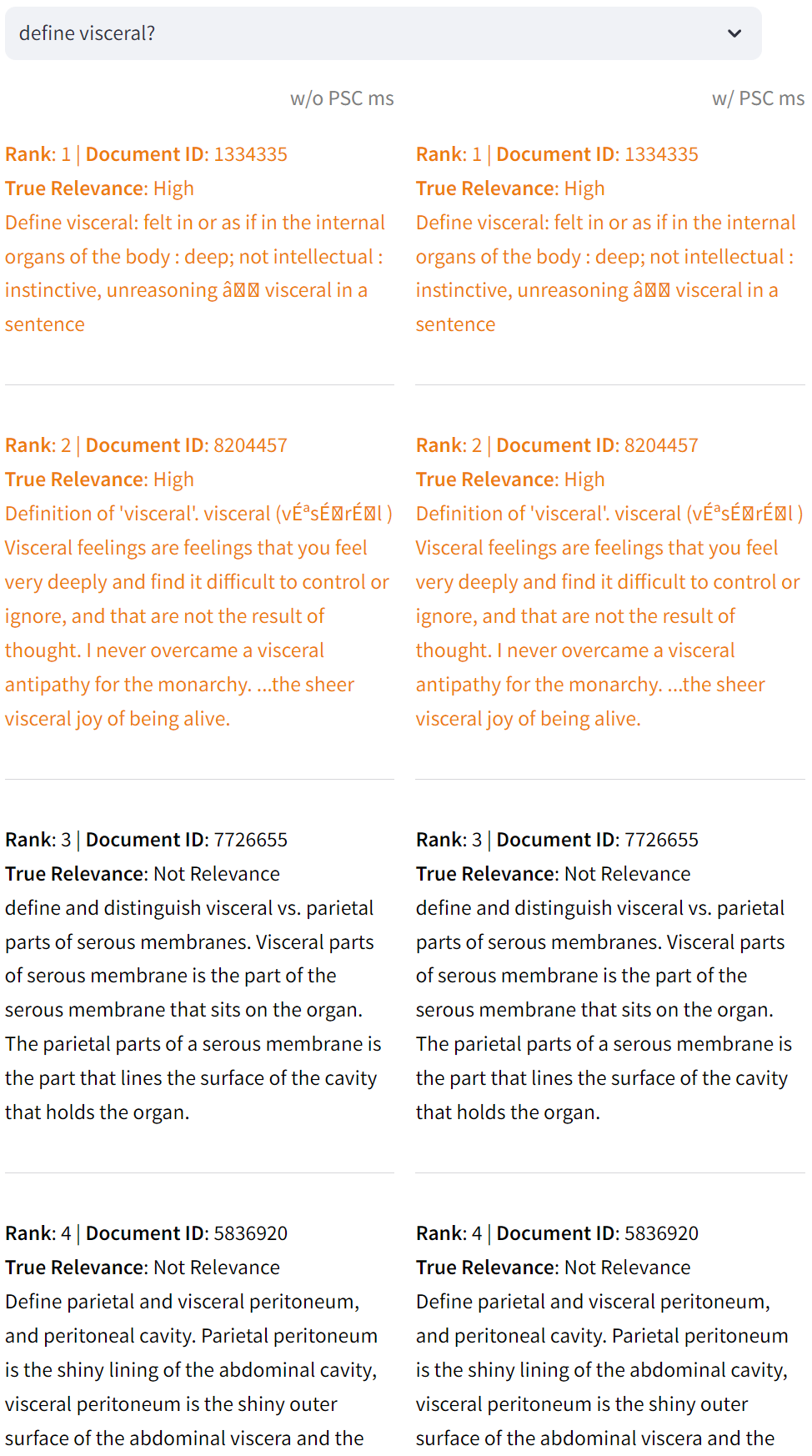}
        \caption{Results reranked by GPT-4. Compared to the previous example, PSC results in no difference.}\label{fig:appendix:ex3}
    \end{subfigure}
    \caption{The DL19 query ``define visceral?'' with relevant documents reranked without PSC on the left and with PSC on the right of each subfigure.}
\end{figure*}
\begin{figure}[t!]
    \includegraphics[width=\columnwidth]{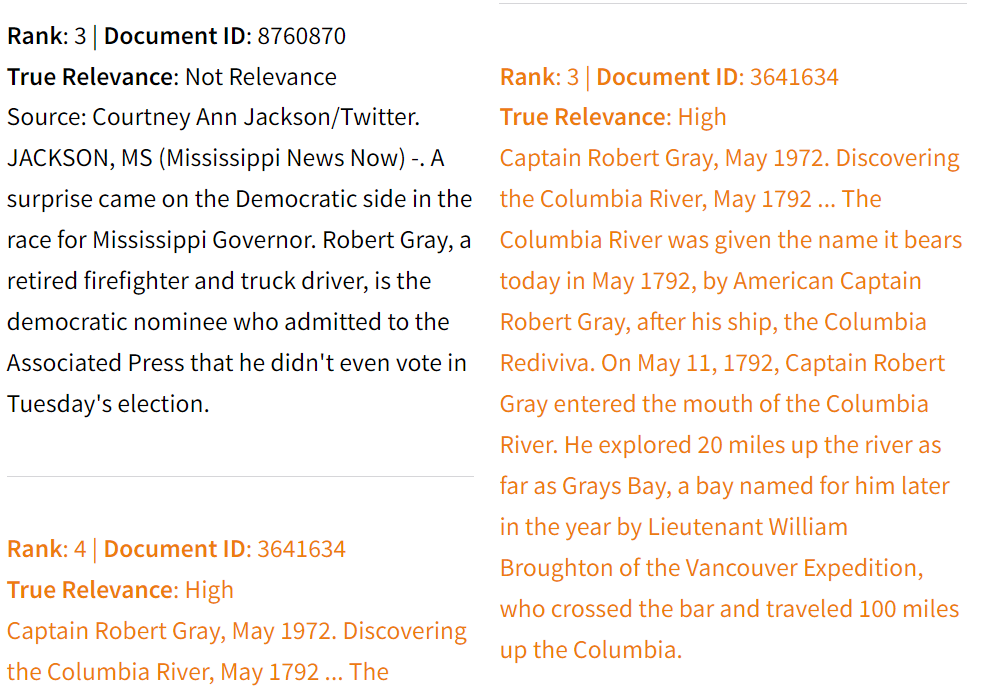}
    \caption{The DL19 query ``who is robert gray'' with relevant documents reranked without PSC on the left and with PSC on the right, with GPT-4 as the model.}\label{fig:appendix:ex1}
\end{figure}

\parheader{Dataset settings}
We made a few further considerations in designing WordSort and MathSort.
To add difficulty to word sorting, for each example we randomly mixed five consecutively ordered words in the English language with five randomly picked ones, e.g., mixing ``aardvark, aaron, abacus ...'' with ``dog, cat, shrew ...''
On MathSort, we ensured that all expressions evaluated to unique values within an example (of 10 expressions).
None of our lists for any task were duplicates.

\subsection{Passage Reranking Task}\label{sec:appendix:passage-reranking-task}

\autoref{tab:passage-rerank-detailed} lists the full prompts for our passage reranking task, following prior art precisely~\cite{sun2023chatgpt, pradeep2023rankvicuna}.
We used the same output extraction procedure from the official codebases as well, ensuring a faithful comparison.

For our OpenAI GPT endpoints, we deployed GPT-3.5 Turbo (version 0613) and GPT-4 on Azure.
In total, at the current public price of \$0.002 and \$0.03 per one thousand tokens,\footnote{\url{https://openai.com/pricing}} we estimate a cost of \$100--200 USD to reproduce the GPT passage ranking results in their entirety, with GPT-4 consuming most of it.

\section{Qualitative Examples}~\label{sec:appendix:qualitative-examples}
We present qualitative examples of our approach on DL19 in Figures~\ref{fig:appendix:ex2}, \ref{fig:appendix:ex3}, and \ref{fig:appendix:ex1}.
In Figures~\ref{fig:appendix:ex2} and \ref{fig:appendix:ex3}, we compare the outputs of GPT-3.5 and GPT-4 with PSC, fixing the query to ``define visceral?''
We find that PSC improves GPT-3.5 but not GPT-4, since GPT-4's original output is already correct, providing visual evidence for why PSC attains more gains on GPT-4 than on GPT-3.5.
In Figure~\ref{fig:appendix:ex1}, GPT-4 with PSC ranks the third document (\#3641634) correctly higher (right) than GPT-4 without PSC (left).
In summary, these illustrations suggest that the quantitative improvements of PSC are not merely illusory.
\vfill
\begin{table*}[t]
    \small
    \setlength{\tabcolsep}{6pt}
    \centering
    \begin{tabular}{lllcccc}
    \toprule[1pt]
    \multirow{2.3}{*}{First Stage} & \multirow{2.3}{*}{Top-$k$} &  \multirow{2.3}{*}{Method} & \multicolumn{2}{c}{\bf TREC-DL19} & \multicolumn{2}{c}{\bf TREC-DL20} \\[-0.5ex]
     \cmidrule(lr){4-5} \cmidrule(lr){6-7}
    & & & Original & Our PSC & Original & Our PSC \\
    \midrule[1pt]
    BM25 & 20  & (1) Single (GPT-3.5, Reversed) & 55.92 & \textbf{62.88} & 52.66 & \textbf{59.59}\\
             & 20 & (2) Single (GPT-4, Reversed) & 64.04 & \textbf{65.60} & 60.20 & \textbf{62.27}\\
         & 100 & (3) RankGPT (GPT-3.5, Reversed) & 56.76 & \textbf{57.32} & 51.03 & \textbf{55.73}\\
             & 100 & (4) RankGPT (GPT-4, Reversed) & 67.83 & \textbf{69.63} & 64.92 & \textbf{65.89}\\
    \bottomrule[1pt]
    \end{tabular}
    \caption{nDCG@10 results on DL19 and 20.}
    \label{tab:results-ir-addl}
\end{table*}

\section{Supplementary Results and Discussion}
During the peer review process of this paper, our reviewers helpfully suggested experiments to further bolster the rigor of our claims.
We explicitly include most of them here, with the remaining feedback incorporated into the related work section.
\begin{table}[t]\small
    \setlength{\tabcolsep}{1.5pt}
    \centering
    \begin{tabular}{lccc}
    \toprule[1pt]
    Method & \textsc{MathSort} & \textsc{WordSort} & \textsc{GSM8KSort} \\
    \midrule[1pt]
    GPT-3.5 (PRP) & 46.7 & 82.2 & 64.0 \\
    GPT-4 (PRP) & 73.3 & 83.9 & 79.9 \\
    \midrule
    GPT-3.5 (PSC) & 75.2 & 88.1 & 88.4 \\
    GPT-4 (PSC) & \textbf{89.6} & \textbf{92.0} & \textbf{90.5} \\
    \bottomrule[1pt]
    \end{tabular}
    \caption{Pairwise ranking prompting versus permutation self-consistency on the sorting tasks.}
    \label{tab:results-sort-addl}
\end{table}

\begin{table}[t]\small
    \setlength{\tabcolsep}{1.5pt}
    \centering
    \begin{tabular}{lccccc}
    \toprule[1pt]
    Method & \textsc{Math} & \textsc{Word} & \textsc{GSM8K} & \textsc{DL19} & \textsc{DL20} \\
    \midrule[1pt]
    GPT-3.5 (Orig.) & 64.0 & 85.9 & 82.1 & 68.00 & 62.08 \\
    GPT-3.5 (Borda) & 74.6 & 87.9 & 88.1 & 70.09 & 62.54 \\
    GPT-3.5 (Our PSC) & \textbf{75.2} & \textbf{88.1} & \textbf{88.4} & \textbf{70.77} & \textbf{62.70} \\
    \midrule
    GPT-4 (Orig.) & 83.5 & 89.9 & 88.4 & 75.00 & 70.36 \\
    GPT-4 (Borda) & 89.2 & 91.5 & 90.4 & 75.23 & 70.62 \\
    GPT-4 (Our PSC) & \textbf{89.6} & \textbf{92.0} & \textbf{90.5} & \textbf{75.66} & \textbf{71.00} \\
    \bottomrule[1pt]
    \end{tabular}
    \caption{Comparisons to using bootstrapping/Borda count as the aggregation algorithm.}
    \label{tab:results-borda-addl}
\end{table}

\subsection{Sorting Tasks}
In \autoref{tab:results-sort-addl}, we demonstrate that our PSC approach outperforms PRP-10 by 15 points in Kendall's tau on average, with higher gains on GPT-3.5.
We chose PRP-10 because it most closely matches ours in computation time.
Overall, these findings are in line with our conclusions on the passage reranking task, as shown in \autoref{tab:results-ir}.

\subsection{Passage Reranking Task}
We additionally conducted experiments on reversing input orders for passage reranking.
Note that it is inapplicable to sorting because the underlying dataset is unordered, so reversing the input would not affect the results.
Shown in \autoref{tab:results-ir-addl}, our PSC method improves the result by an average of 3.2 points;\ thus, it successfully mitigates position bias regardless of the order used in the sliding window.

Finally, we show that PSC outperforms the bootstrapping technique (or ``Borda count,'' as the rank aggregation literature calls it) from \citet{hou2024large}.
Presented in Table~\ref{tab:results-borda-addl}, our method consistently outperforms bootstrapping on GPT-3.5 and GPT-4, possibly because of Kemeny ranking's theoretical optimality.
The gains are roughly equal between GPT-3.5 (0.39 average point increase) and GPT-4 (0.42 points).
We conclude that the choice of rank aggregation algorithm matters.

\subsection{Computational Burden}
\label{sec:appendix:computational-burden}
Finally, we discuss the difference in time and computational cost of the proposed method compared to other baselines.
We concede computation time is a general limitation of many self-consistency-style works, as we acknowledge in our limitations section.
Our advantage is that PSC is embarrassingly parallel and scales horizontally with ease, e.g., for a choice of five repetitions, spinning up five instances will be roughly equivalent to the original baseline of one.
Furthermore, our Kemeny-optimal aggregation method is virtually instantaneous for practical sample sizes (less than 0.05 CPU seconds).
This contrasts with methods such as PRP that necessitate, say, 20--200 sequential calls for a list size of 10 rather than our fully parallelizable 5--20.

Thus, even though in theory our method is asymptotically linear, in practice the big-O constant can be made ``small'' by horizontal scaling, assuming the presence of parallel computing.
As a rough quantitative comparison, our experiments run 20 parallel calls to multiple deployments of GPT-3.5 and GPT-4, incurring a running time of no more than 25\% (in addition to) a single call.

\end{document}